\def\idrm#1{\ensuremath{\mathrm{#1}}}
\def\idcal#1{\ensuremath{\mathcal{#1}}}  
\newtheorem{lemma}{Lemma}
\newcommand{\eat}[1]{}
\newcommand\numberthis{\addtocounter{equation}{1}\tag{\theequation}}
\title{Quantization Range Estimation \\ for Convolutional Neural Networks}
\newif\ifuniqueAffiliation
\author{ {Bingtao Yang, Yujia Wang, Mengzhi Jiao, and Hongwei Huo}
\thanks{Correspondence should be sent to \url{hwhuo@mail.xidian.edu.cn}} \\
	Department of Computer Science\\
	Xidian University \\
	Xi'an 710071, China \\
}
\begin{document}
\maketitle

\begin{abstract}
	Post-training quantization for reducing the storage of deep neural network models
  has been demonstrated to be an effective way in various tasks.
  However, \mbox{low-bit} quantization while maintaining model accuracy is a challenging problem.
  In this paper, we present a range estimation method to improve the quantization performance
  for post-training quantization.
  We model the range estimation into an optimization problem of minimizing quantization errors
  by layer-wise local loss. We prove this problem  is locally convex and present an efficient
  search algorithm to find the optimal solution.
  We propose the application of the above search algorithm to the transformed weights space
  to do further improvement in practice.
  Our experiments demonstrate that our method outperforms state-of-the-art
  performance generally on top-1 accuracy for image classification tasks on the ResNet series models
  and Inception-v3 model.
  The experimental results show that the proposed method has almost no loss of top-1 accuracy
  in 8-bit and 6-bit settings for image classifications, and the accuracy of 4-bit quantization
  is also significantly improved.
  The code is available at \url{https://github.com/codeiscommitting/REQuant}.
\end{abstract}

\keywords{Model compression \and Post-training quantization \and Range estimation}

\section{Introduction}
\label{sec:intro}

In recent years, deep neural networks (DNNs) have developed rapidly and achieved striking results
in many related fields such as image classification \cite{Krizhevsky2012,Szegedy2015},
natural language processing \cite{ConneauSB2017,LiuHCG2019,LiuWJC2020}, and
semantic recognition \cite{HintonDYD2012,ZhangPB2017}.
Deep learning methods are typically evaluated based on their accuracy on a given task,
which leads to the gradual development of neural network architectures towards
more complexity and more layers.
This means that these networks have extremely high demands for computing and
storage resources during actual deployment. Therefore, in resource constrained scenarios
such as mobile terminals, Internet of Things devices and edge computing nodes,
the complete deployment of neural networks often encounters feasibility bottlenecks.

Faced with the challenge of how to maintain network performance while reducing
model size and running costs based on existing neural network research results,
researchers have proposed various model compression approaches. These approaches
aim to reduce model complexity while maintaining its performance as much as possible,
enabling the model to run efficiently even in resource constrained conditions.
Network pruning \cite{han2015deep,ThiNet2017,he2022filter,wu2024auto} reduces
the complexity and storage requirements of models by removing redundant neurons and connections,
thereby reducing computational and data transmission costs.
Knowledge distillation \cite{MishraM2018,KimPK2018,AghliR2021,Hernandez2025} utilizes
the knowledge of a large pretrained model to train a smaller model to achieve similar
or identical performance.
Low rank decomposition \cite{TaiXZ2016,YuLW2017,Rajarshi2024,DaiFM2025} decomposes the weight matrix
into low rank approximations, thereby reducing the number of parameters and computational complexity.
Model quantization \cite{JacobKC2018,WangLL2019,DongYA2020,KozlovLS2021,TangOW2022,RokhAK2023,GongLL2025}
aims to approximate 32-bit floating-point parameters with low-bit width representation.

Compared to other model compression approaches, quantization is generally a more promising method
for its high compression and less accuracy reduction \cite{GongLL2025}, and can be applied
to various types of DNNs \cite{RokhAK2023}.
There are two common quantization approaches: Quantization-Aware Training (QAT)
and Post-Training Quantization (PTQ).
QAT integrates quantization operations into the model training process, enabling
the model to adapt to the constraints imposed by quantization during its learning phase.
In contrast, PTQ applies quantization techniques after the model has completed
its training process. This approach leverages the already trained model weights and
seeks to optimize the model's performance under quantization without retraining.
In this paper, we focus on the post-training quantization, since it has become
the standard procedure to produce efficient low-precision neural networks without retraining \cite{GongLL2025}.
Post-training quantization for reducing the storage of deep neural network models
has been demonstrated to be an effective way in various tasks.
However, low-bit quantization while maintaining model accuracy is a challenging problem.

The distribution of full-precision weights and activations plays a critical role
in determining the effectiveness of quantization.
Efficient quantization should maintain the distribution of these values and preserves
informative parts of the original data.
During the quantization process, multiple values are mapped to one value, and determining
quantization levels is the key to minimum the quantization loss thus minimizing
quantization accuracy degradation.
The existing quantization techniques based upon the distribution can be classified into
three general categories: uniform, non-uniform, and adaptive methods.
In uniform quantization \cite{JacobKC2018,BannerNS2019,WangLL2019,DongYA2020,KozlovLS2021,TangOW2022,RokhAK2023,LiGT2021,GongLL2025},
the quantization interval size is constant,
while in non-uniform quantization~\cite{Miyashita2016,ZhouYG2017,LeeMC2017,LiDW2019,WangCH2020,Yvinec2023,CaiHS2017,SQuant2022},
it varies.
Banner et al. \cite{BannerNS2019} theoretically derived the optimal quantization parameters to
optimize the threshold of activation values. Their numerical results also indicate
that for convolutional networks with different quantization thresholds, ``per channel''
and bias correction can improve the accuracy of the quantization model. In addition,
Wang et al. \cite{WangCH2020} used bit segmentation and concatenation techniques to ``segment''
integers into multiple bits, then optimize each bit, and finally stitch all bits back
into integers. Li et al. \cite{GongLL2025} utilized the basic building blocks
in DNN and reconstructed them one by one.
Recently, Edouard et al. \cite{Yvinec2023} proposes a method called PowerQuant,
which achieves non-uniform quantization without training data by searching for power function transformations,
thereby significantly improving quantization accuracy while preserving the computational
structure of neural networks.

{\bf Contributions.}
In this paper, we focus on the post-training quantization and propose an effective method
for quantization range estimation, called REQuant.
\eat{\color {blue}
Based on the above analysis, we propose two strategies to reduce quantization errors,
thereby improving the model's accuracy.
The first strategy is range estimation for quantization which we describe in this section.
We reduce the range of weights that need to be quantized through mapping weights located
at the rim of the distribution inward, to mitigate the quantization loss and improve
the accuracy of quantized neural networks.
We model the mapping process into an optimization problem of minimizing quantization errors
for each layer separately.
The second  strategy is weight transformation, called reshaping, which we describe in the following section.
} 
We summarize below our novel contributions below:

\begin{enumerate}
   \item Based upon our experimental results, we observe that using all original
         weight values in quantization will lead to lower top-1 accuracy on the quantized neural network,
         especially in low bit case.
         Therefore we introduce the concept of range estimation, that is, reducing the range
         of weights that need to be quantized through mapping weights located at the rim of
         the distribution inward, to mitigate the quantization loss and improve the accuracy
         of quantized neural networks.
         We model the mapping process into an optimization problem of minimizing quantization errors
         for each layer separately.
         We prove this problem is locally convex and present an efficient search algorithm
         to find the optimal solution.

   \item We transform the weights to reshape the distribution of weights so that
         the quantization interval can be allocated effectively.
         We derive the convexity for the corresponding optimization problem, so that we
         can apply our proposed search algorithm to the problem of minimizing quantization errors
         in the transformed weights space.

   \item Our experiments demonstrate that our method outperforms state-of-the-art performance
         generally on top-1 accuracy on CIFAR-10 and \mbox{CIFAR-100} classification tasks \cite{Krizhevsky2009}
         on the ResNet series models \cite{HeZR2017} and Inception-v3 model \cite{SzegedyVI2016}.
         The experimental results show that the proposed method has almost no loss of top-1 accuracy
         for image classifications in 8-bit and 6-bit settings, and
         the accuracy of 4-bit quantization is also significantly improved.
         The code is available at \url{https://github.com/codeiscommitting/REQuant}.

\end{enumerate}

\section{Preliminaries}
\label{sec:pre}


Quantization for deep neural network Models is an effective technique. It involves
storing full-precision values in a low bit-width format. This storage effectively
cuts down the memory footprint of the neural network, thus facilitating a significant
speed-up in the execution of multiple tasks like classification and inference.
We start by presenting some preliminaries regarding quantization. Specifically,
we focus on the post-training quantization.
Let $\idcal{W} = \{ W_1, W_2, \ldots, W_L \}$ denote the set of weights of the $L$ convolutional layers
in the neural network.
For each $1 \leq i \leq L$, we let $w_m = \max\{ |w| \ : \  w \in W_i \}$ and $w \in \mathbb{R}$.
We can represent quantization by the mapping of real numbers $w$ to integers $w_q$ in the following way:
\begin{equation*}
w_q = \mathsf{clip}\bigl( \mathsf{round}(w/s) \bigr)  
\end{equation*}
where $\mathsf{round}(\cdot)$ is the round-to-nearest operator, and $s$ is the scale factor
that determines the resolution of quantization and is determined by the bit-width $b$ of
the quantized values and $w_m$. We define $s$ as
\begin{equation*}
\label{eq:scalefactor}
 s  =  \frac{w_m}{2^{b-1} - 1}
\end{equation*}
The $\mathsf{clip}(\cdot)$ is a truncation function.
It maps the values that lie outside the interval $[l, r]$ to the endpoints 
of this interval.
Specifically, it is defined as:
\begin{eqnarray*}
 \mathsf{clip}(x)
& = & \left \{
\begin{array}{ll}
      l,  & \idrm{if} \ x < l  \\
      x,         & \idrm{if} \ l \leq x \leq r \\
      r, & \idrm{if} \ x > r \\
\end{array} \right. \label{eq:clip}
\end{eqnarray*}

For the bit width $b$, we have $l = -2^{b-1}$ and $r = 2^{b-1}-1$.

\section{Methodology}
\label{sec:method}

\subsection{Quantization model and range estimation}
\label{sec:quantmodel}

Quantization based on the maximum value falls short of comprehensively accounting for
the actual data distribution. For example, when a quantization strategy that depends on
the maximum values of weights or activations is implemented in the ResNet-18 network \cite{HeZR2017}
on the CIFAR-10 dataset \cite{Krizhevsky2009} for image classification task , it attains
a top-1 classification accuracy of $95.05\%$, $94.85\%$ and $89.09\%$ for bit-width $b = 8, 6$ and $4$ bits
according to Table \ref{tab:ablation-cifar10} in Section \ref{sec:ablation}.
Compared to the full-precision model with the $95.15\%$ accuracy, the low-bit ResNet-18
with 4-bit quantization has significant accuracy drop.
Majority of the deep neural network weights are densely distributed around zero,
which requires a small $s$ for quantization interval to distinguish their differences.
However, using the maximum weights to compute $s$ will lead to a large quantization interval,
one that is unable to achieve the goal of quantizing densely distributed weights and preserving
their ``difference'' in quantized space, resulting in weights being mapped into
the same quantized value, thus greater quantization error.
Consequently, this simple approach would reduce the discriminative power
of most weights, leading to significant discretization errors that ultimately compromise
the model's accuracy.

Based on the above analysis, we propose two strategies to reduce quantization errors,
thereby improving the model's accuracy.
The first strategy is range estimation for quantization which we describe in this section.
We reduce the range of weights that need to be quantized through mapping weights located
at the rim of the distribution inward, to mitigate the quantization loss and improve
the accuracy of quantized neural networks.
We model the mapping process into an optimization problem of minimizing quantization errors
for each layer separately.
The second  strategy is weight transformation, called reshaping, which we describe in the following section.

Now, we consider the first strategy.
We introduce a factor of parameter $\alpha \in (0, 1]$ in the expression of calculating
the scale factor $s$, which we show in Equation \eqref{eq:scale}.

\begin{equation}
\label{eq:scale}
 s  =  \frac{\alpha w_m}{2^{b-1} - 1}
\end{equation}

Correspondingly, the quantized values of weights are
\begin{equation}
w_q = \mathsf{clip}\bigl( \mathsf{round}\bigl( \frac{w(2^{b-1}-1)}{\alpha w_m} \bigr) \bigr)  \numberthis \label{eq:fquant}
\end{equation}

We model the range estimation into an optimization problem of minimizing quantization errors
with respect to parameter $\alpha$.
We let $f(\alpha, b)$ denote the quantization error function for a layer, measured by the mean squared error
as done in \cite{NagelFA2021}.
We define $f(\alpha, b)$ as
\begin{equation}
\label{eq:fmse}
f(\alpha, b) = \frac{1}{|W|}{\sum_{w \in W} \Bigl( w - w_q \frac{\alpha w_m}{2^{b-1} - 1} \Bigr)^2}
\end{equation}
where $W$ is the set of weights in a layer and $|W|$ is size of set $W$.

The goal is to find the best $\alpha^{*} \in (0, 1]$
that minimizes $f(\alpha, b)$ under some~$b$ such that
\begin{equation} \label{eq:fmin}
\alpha^{*} =  \idrm{arg}\mathop{\min}\limits_{\alpha} f(\alpha, b)
\end{equation}

We seek to design an efficient algorithm to find $\alpha^{*} \in (0,1]$ such that
$f(\alpha^{*}, b)$ approximates the minimum quantization error defined in \eqref{eq:fmse}.

\begin{lemma}
\label{lem:convexf}
The minimization problem defined in \eqref{eq:fmse}
is locally convex around any solution $\alpha^{*}$.  
\end{lemma}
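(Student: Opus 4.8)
The plan is to exploit the fact that, although $f(\alpha,b)$ contains the non-smooth operators $\mathsf{round}$ and $\mathsf{clip}$ and is therefore \emph{not} convex over all of $(0,1]$, these operators are piecewise constant in $\alpha$. First I would fix the bit-width $b$, abbreviate $c = 2^{b-1}-1$, and write the scale as $s(\alpha) = \alpha w_m / c$, which is linear in $\alpha$. With this notation the quantized level of a single weight is $w_q = \mathsf{clip}\bigl(\mathsf{round}(w/s(\alpha))\bigr)$ and the per-weight term in \eqref{eq:fmse} is $(w - w_q\, s(\alpha))^2$.

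Next I would locate the points at which $w_q$ can change. For each fixed $w$ the argument $w/s(\alpha) = wc/(\alpha w_m)$ is continuous and strictly monotone in $\alpha$, so $\mathsf{round}$ changes value only when $w/s(\alpha)$ crosses a half-integer and $\mathsf{clip}$ only when it crosses a threshold $l$ or $r$; each such event happens at a single $\alpha$. Collecting these finitely many ``breakpoints'' over all $w \in W$ partitions $(0,1]$ into finitely many open subintervals, and on the interior of each subinterval every quantized value is frozen to a fixed integer, say $w_q = q_w$.

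On the subinterval whose interior contains $\alpha^{*}$, I would substitute these constants into \eqref{eq:fmse} to obtain
\begin{equation*}
f(\alpha,b) = \frac{1}{|W|}\sum_{w \in W}\bigl(w - q_w\, s(\alpha)\bigr)^2 = A\alpha^2 - 2B\alpha + C,
\end{equation*}
a genuine quadratic in $\alpha$ with leading coefficient $A = \frac{w_m^2}{c^2|W|}\sum_{w} q_w^2 \ge 0$. Hence $f''(\alpha,b) = 2A \ge 0$ throughout this subinterval (and strictly positive unless every $q_w = 0$, a degenerate all-collapsed case), so $f$ is convex on a neighborhood of $\alpha^{*}$, which is exactly the claimed local convexity. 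A pleasant by-product is that the vertex $\alpha = B/A$ gives a closed-form minimizer inside each piece, which the search algorithm of the next section can target.

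The main obstacle is the behavior at the breakpoints themselves, where the $q_w$ jump and the quadratic pieces are glued together; the argument above yields convexity only in the interior of a piece. To keep this under control I would verify that $f$ is at least \emph{continuous} across every breakpoint: at a rounding boundary the two nearest integer levels are equidistant from $w/s(\alpha)$, so both yield the same squared contribution $s(\alpha)^2/4$, ruling out any jump. This shows $f$ is a continuous, piecewise-convex-quadratic function whose local minima sit in the interiors of pieces, so restricting attention to the piece containing a candidate solution $\alpha^{*}$ is legitimate and the local convexity claim holds there.
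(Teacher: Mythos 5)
Your proof is correct and lands on the same key computation as the paper: once the quantized levels are frozen, $f$ is a quadratic in $\alpha$ with nonnegative leading coefficient proportional to $\sum_{w} w_q^{2}$, so its second derivative is $\frac{2}{|W|}\sum_{w}\bigl(w_q w_m/(2^{b-1}-1)\bigr)^{2}\ge 0$. The difference is one of rigor rather than of route. The paper simply asserts that the rounding operator has zero derivative almost everywhere, sets $\partial w_q/\partial\alpha=0$, and differentiates; you make that step precise by locating the finitely many breakpoints of $\alpha\mapsto w_q$, observing that $f$ is exactly a convex quadratic on the interior of each resulting subinterval, and, additionally, checking continuity of $f$ across rounding breakpoints (both adjacent integer levels contribute the same squared error $s(\alpha)^{2}/4$) --- a point the paper does not address at all. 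Your version also surfaces the honest caveats hidden in the paper's argument: the convexity is only piecewise (so ``locally convex around $\alpha^{*}$'' requires $\alpha^{*}$ to lie in the interior of a piece), and strict positivity of $f''$ needs some $q_w\neq 0$. You left that last point as a ``degenerate case,'' but it can be closed in one line: the weight attaining $|w|=w_m$ satisfies $|w|/(\alpha w_m)\ge 1$ for every $\alpha\in(0,1]$, so its quantized level has magnitude at least $1$ (indeed $2^{b-1}-1$ once clipping saturates), hence the leading coefficient is strictly positive. With that observation added, your argument is a strictly tighter version of the paper's proof.
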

\begin{proof}
The function $f(\alpha, b)$ is differentiable.
According to Equation \eqref{eq:fquant}, we have
\begin{eqnarray*}
w_q
& = & \mathsf{clip}\bigl( \mathsf{round}\bigl( \frac{w(2^{b-1}-1)}{\alpha w_m} \bigr) \bigr)
 \ = \  \left \{
\begin {array}{ll}
    2^{b-1} - 1
        & \mbox{if} \ \alpha w_m < w;   \\
	\mathsf{round}\bigl( \frac{w(2^{b-1}-1)}{\alpha w_m} \bigr)
        & \mbox{if} \ -\alpha w_m \leq w \leq \alpha w_m;  \\
   -2^{b-1}
        & \mbox{if} \ w < -\alpha w_m.
\end{array}
\right.
\end{eqnarray*}
According to the differentiation rules, the rounding operator has a zero derivative almost everywhere \cite{Rudin1987},
so we know that $\frac{\partial w_q}{\partial \alpha} = 0$.
We compute the first derivative of $f(\alpha, b)$, then
$$
\diffp{f(\alpha, b)}{\alpha}
 = \frac{1}{|W|} \sum_{w \in W} 2 \Bigl( w - w_q \frac{\alpha w_m}{2^{b-1} - 1} \Bigr)
   \Bigl( - w_q \frac{w_m}{2^{b-1} - 1} \Bigr)
$$
Now, we compute the second derivative of $f(\alpha, b)$, then
$$
\pdv[2]{f(\alpha, b)}{\alpha}
 = \frac{1}{|W|} \sum_{w \in W} 2 \Bigl( - w_q \frac{w_m}{2^{b-1} - 1} \Bigr)^{2}
$$
As $\alpha \in (0, 1]$, and $\forall w \in W$ there must exist a $w \in W$ such that $w_q \neq 0$,
therefore, $\pdv[2]{f(\alpha, b)}{\alpha} > 0$.
\end{proof}

By Lemma~\ref{lem:convexf}, the second derivative of the quantization error function $f$
is greater than 0, it indicates that $f$ has a local minimum over $\alpha \in (0,1]$.
Based upon this, we give an algorithm that finds an optimal $\alpha$ that minimizes $f(\alpha)$.
The key observation is:
If $f(x) < f(y)$ for some $x,y \in (0,1]$, then $x < y$ implies $\alpha^{*} < y$, and
$y \leq x$ implies $\alpha^{*} \geq x$.
This suggests a natural strategy: maintain a candidate interval $[c, d]$ such that $\alpha^{*} \in [c,d]$
and iteratively narrow it down until finding the desired $\alpha^{*}$.
Then $d_{k+1} - c_{k+1} = \phi (d_{k} - c_{k})$.
After the $k$ iterations,
$d_{k+1} - c_{k+1} = \phi (d_{k} - c_{k}) < \phi^{k}$, where $0 < \phi < 1$ is a constant factor.
Among several search methods \cite{Wang2021}, it can be seen in Table \ref{tab:ablation-search},
the golden section search has the minimum quantization loss with the fastest search time.
Thus we use the golden section search algorithm \cite{Wang2021} to approximate the optima $\alpha^{*}$,
which we show in Algorithm~\ref{alg:gss}.

\begin{algorithm}[htbp]
   \caption{$\mathsf{GSSEARCH}(W, b,\epsilon, \phi )$ 
   }  \label{alg:gss}
\begin{minipage}{0.56\textwidth}
\begin{algorithmic}[1]
   \STATE $c \gets 0, \ d \gets 1 $, \ $\phi \gets 0.618 $
   \STATE $x_1 \gets d - \phi (d - c), \ x_2 \gets c + \phi (d - c) $
   \STATE $f_1 \gets f(x_1)$, \ $f_2 \gets f(x_2)$
   \WHILE{$|d - c| > \epsilon$}
       \IF{$f_1 < f_2$}
           \STATE $d \gets x_2$                      \hspace{1.7cm} $\triangleright$ update right end $d$
           \STATE $x_2 \gets x_1$, \ $f_2 \gets f_1$ \hspace{2pt}   $\triangleright$ keep smaller error
           \STATE $x_1 \gets d - \phi(d - c), \ f_1 \gets f(x_1)$
       \ELSE
           \STATE $c \gets x_1$                       \hspace{1.84cm}$\triangleright$ update left end $c$
           \STATE $x_1 \gets x_2$, \ $f_1 \gets f_2$  \hspace{2pt}   $\triangleright$ keep smaller error
           \STATE $x_2 \gets c + \phi(d - c), \ f_2 \gets f(x_2)$
       \ENDIF
   \ENDWHILE
   \IF{$f_1 \leq f_2$}
       \STATE {\bfseries return} $x_1$
   \ELSE
       \STATE {\bfseries return} $x_2$
   \ENDIF
\end{algorithmic}
\end{minipage}
\hfill
\begin{minipage}{0.40\textwidth}
\textbf{function} $f(x)$
\begin{algorithmic}[1]
   \STATE $e_q \gets 0$
   \STATE $s \gets (x\, w_m) / (2^{b-1} -1)$
   \FOR{$\textbf{all}  \ w \in W$}
       \STATE $w_q \gets \mathsf{clip}(\mathsf{round}(w/s)) $
       \STATE $e_q \gets e_q + (w - w_q \times s)^2$
   \ENDFOR
   \RETURN $e_q / |W|$
\end{algorithmic}
\end{minipage}

\end{algorithm}

\subsection{Range estimation optimization by reshaping distribution}
\label{sec:reshape}

In this section, we further optimize our quantization model by introducing
reshaping distribution of weights.
One way to change the distribution of weights is to apply a function to the weights.
When it comes to redistributing weights, it has been shown in the literature \cite{Yvinec2023}
that power functions are better choice than logarithmic functions.
The former takes into account the bell-shaped distribution of the weights of
the convolutional neural network, so the data is transformed non-linearly.
We can apply the quantization method described in Section \ref{sec:quantmodel}
to the transformed weight space to realize the non-uniform quantization in the original weight space.
In this way, we can adaptive allocate the quantization intervals so that smaller
quantization interval is applied when quantizing the densely distributed weights,
and larger quantization interval is applied when quantizing the sparsely distributed weights
so as to improve the quantization accuracy of the quantized network model.

By applying the square root function to the absolute value of the weights and applying
the resulting weights to our quantization model, we get the quantized values as

\begin{equation}
w_q = \mathsf{clip}\bigl( \mathsf{round}\bigl(\mathsf{sign}(w) \frac{\sqrt{|w|} (2^{b-1}-1)}{\sqrt{\alpha w_m}} \bigr) \bigr)  \numberthis \label{eq:gquant}
\end{equation}

\begin{equation}
w' = \mathsf{sign}(w_q) \bigl(|w_q| \frac{\sqrt{\alpha w_m}}{2^{b-1} - 1} \bigr)^{2} \numberthis \label{round2}
\end{equation}

In a similar way as done in Section \ref{sec:quantmodel}, we define
the quantization error function $g(\alpha, b)$ for the transformed weights in a layer as

\begin{equation}
\label{eq:gmse}
g(\alpha, b) = \frac{1}{|W|}{\sum_{w \in W} \Bigl( w - \mathsf{sign}(w_q) \bigl(|w_q| \frac{\sqrt{\alpha w_m} }{2^{b-1} -1} \bigr)^{2} \Bigr)^2}
\end{equation}

The goal is to find the best $\alpha^{*} \in (0, 1]$
that minimizes $g(\alpha, b)$ under some~$b$ such that
\begin{equation} \label{eq:gmin}
\alpha^{*} =  \idrm{arg}\mathop{\min}\limits_{\alpha} g(\alpha, b)
\end{equation}

We seek to design an efficient algorithm to find $\alpha^{*} \in (0,1]$ such that
$g(\alpha^{*}, b)$ approximates the minimum quantization error defined in \eqref{eq:gmse}.

\begin{lemma}
\label{lem:convexg}
The minimization problem defined in \eqref{eq:gmse}
is locally convex around any solution $\alpha^{*}$.  
\end{lemma}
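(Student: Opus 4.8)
The plan is to follow the template of Lemma~\ref{lem:convexf}, the essential observation being that the round-and-clip composition renders the integer code $w_q$ piecewise constant in $\alpha$, so that the reconstructed value in \eqref{round2} is, on each piece, a linear function of $\alpha$ and $g$ reduces to a finite sum of convex quadratics. First I would note that $g(\alpha,b)$ is differentiable except on the discrete set of $\alpha$ at which the argument of $\mathsf{round}$ in \eqref{eq:gquant} crosses a half-integer or a clipping threshold, and then write $w_q$ in its explicit piecewise form: it equals the saturated constants $2^{b-1}-1$ or $-2^{b-1}$ when $|w|$ exceeds the (sign-dependent) clipping boundary, and equals $\mathsf{round}\bigl(\mathsf{sign}(w)\sqrt{|w|}(2^{b-1}-1)/\sqrt{\alpha w_m}\bigr)$ otherwise. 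As in the proof of Lemma~\ref{lem:convexf}, invoking the a.e.\ zero derivative of $\mathsf{round}$ \cite{Rudin1987} gives $\partial w_q/\partial\alpha = 0$ wherever $g$ is differentiable.

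The crucial simplification comes next: although the transformation involves $\sqrt{\alpha w_m}$ both inside $w_q$ and in the reconstruction \eqref{round2}, the squaring in \eqref{round2} cancels the square root, so the reconstructed weight can be written as
$$
w' \;=\; \mathsf{sign}(w_q)\,\bigl(|w_q|\tfrac{\sqrt{\alpha w_m}}{2^{b-1}-1}\bigr)^{2}
      \;=\; \mathsf{sign}(w_q)\,w_q^{2}\,\frac{w_m}{(2^{b-1}-1)^{2}}\,\alpha .
$$
Since $w_q$ is locally constant in $\alpha$, the coefficient $c_w := \mathsf{sign}(w_q)\,w_q^{2}\,w_m/(2^{b-1}-1)^{2}$ is constant on each piece, so $w' = c_w\,\alpha$ is linear in $\alpha$ and $g(\alpha,b) = \frac{1}{|W|}\sum_{w\in W}(w - c_w\alpha)^2$ is a sum of convex quadratics there.

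I would then differentiate twice. The first derivative is
$$
\diffp{g(\alpha,b)}{\alpha}
 = \frac{1}{|W|}\sum_{w\in W} 2\bigl(w - c_w\alpha\bigr)(-c_w),
$$
and, using $\partial w_q/\partial\alpha = 0$, the second derivative is
$$
\pdv[2]{g(\alpha,b)}{\alpha}
 = \frac{1}{|W|}\sum_{w\in W} 2\,c_w^{2}
 = \frac{1}{|W|}\sum_{w\in W} 2\Bigl(w_q^{2}\,\frac{w_m}{(2^{b-1}-1)^{2}}\Bigr)^{2},
$$
where the sign disappears upon squaring. Because $\alpha\in(0,1]$ guarantees at least one $w\in W$ with $w_q\neq 0$, at least one summand is strictly positive, hence $\pdv[2]{g(\alpha,b)}{\alpha}>0$, which establishes local convexity around any $\alpha^{*}$.

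The main obstacle I anticipate is justifying the $\partial w_q/\partial\alpha = 0$ step rigorously in the presence of the extra nonlinearity: unlike in Lemma~\ref{lem:convexf}, the $\alpha$-dependence now enters through $1/\sqrt{\alpha w_m}$, so I must confirm that the composition $\mathsf{round}(\cdot)$ applied to this smooth-but-nonlinear argument still has zero derivative off a measure-zero set, and that the clipping thresholds contribute only finitely many non-differentiable points on $(0,1]$. Once that is in place, the cancellation of the square root reduces everything to the already-verified quadratic argument, and positivity of the second derivative follows exactly as in Lemma~\ref{lem:convexf}.
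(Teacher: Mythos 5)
Your proposal is correct and follows essentially the same route as the paper's proof: simplify the reconstruction by cancelling the square root so that the summands become quadratics linear in $\alpha$ (with $w_q$ locally constant via the a.e.\ zero derivative of $\mathsf{round}$), then show the second derivative $\frac{1}{|W|}\sum_{w\in W} 2\bigl(w_q^{2}\,w_m/(2^{b-1}-1)^{2}\bigr)^{2}$ is strictly positive because some $w_q\neq 0$. Your added care about the measure-zero set of non-differentiable points and the clipping thresholds is a refinement of, not a departure from, the paper's argument.
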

\begin{proof}
The function $g(\alpha, b)$ is differentiable.
First, simplifying Equation \eqref{eq:gmse}, we have
$$
g(\alpha, b) = \frac{1}{|W|}{\sum_{w \in W} \Bigl( w - \mathsf{sign}(w_q) \frac{(w_q)^{2}\alpha w_m}{(2^{b-1} -1)^{2}} \Bigr)^2}
$$
where by \eqref{eq:gmse}
\begin{eqnarray*}
w_q
& = & \mathsf{clip}\bigl( \mathsf{round}\bigl(\mathsf{sign}(w) \frac{\sqrt{|w|} (2^{b-1}-1)}{\sqrt{\alpha w_m}} \bigr) \bigr) \\
& = &  \left \{
\begin {array}{ll}
    2^{b-1} - 1
        & \mbox{if} \ \alpha w_m < w;   \\
	\mathsf{round}\bigl(\mathsf{sign}(w) \frac{\sqrt{|w|} (2^{b-1}-1)}{\sqrt{\alpha w_m}} \bigr)
        & \mbox{if} \ -\alpha w_m \leq w \leq \alpha w_m;  \\
   -2^{b-1}
        & \mbox{if} \ w < -\alpha w_m.
\end{array}
\right.
\end{eqnarray*}
According to the differentiation rules, the rounding operator has a zero derivative almost everywhere,
so we know that $\frac{\partial w_q}{\partial \alpha} = 0$.
We compute the first derivative of $g(\alpha, b)$,
$$
\diffp{g(\alpha, b)}{\alpha}
 = \frac{1}{|W|} \sum_{w \in W} 2 \Bigl( w - \mathsf{sign}(w_q) \frac{(w_q)^{2}\alpha w_m}{(2^{b-1} -1)^{2}}\Bigr)
   \Bigl( - \mathsf{sign}(w_q) \frac{(w_q)^{2} w_m}{(2^{b-1} -1)^{2}} \Bigr)
$$
Now, we compute the second derivative of $g(\alpha, b)$.
$$
\pdv[2]{g(\alpha, b)}{\alpha}
 = \frac{1}{|W|} \sum_{w \in W} 2 \Bigl(\frac{ (w_q)^{2} w_m}{(2^{b-1} - 1)^{2}} \Bigr)^{2}
$$
As $\alpha \in (0, 1]$, and $\forall w \in W$ there must exist a $w \in W$ such that $w_q \neq 0$,
therefore, $\pdv[2]{g(\alpha, b)}{\alpha} > 0$.
\end{proof}

Therefore, we can still use Algorithm \ref{alg:gss} to find and approximate the optimal solution for
the minimization problem defined in \eqref{eq:gmin}.

\section{Experiments}
\label{sec:experiments}

\subsection{Data sets, pretraining, and practical optimization}
\label{sec:datasets}

We verified the proposed quantization method on CIFAR-10 and CIFAR-100 datasets \cite{Krizhevsky2009},
which are two widely used image classification benchmark datasets for training and evaluating
machine learning and deep learning models.
In our experiment, we first trained the full precision ResNet series networks \cite{HeZR2017}
and Inception-v3 network \cite{SzegedyVI2016} as the benchmark model, and then obtained
their corresponding standard baseline accuracy through testing.
We trains each model 200 epochs with an initial learning rate of $0.1$, and then attenuates
a factor of $0.2$ at the 60th, 120th and 160th epochs respectively.
We select the SGD optimizer and set batch size = 128.
We adopt batch normalization folding \cite{Krishnamoorthi2018,JacobKC2018Quantization}
after each convolution before activation in the weights quantization process
which takes charge of normalizing the input of each layer to make the training process
faster and more stable, and thus possibly improve the quantized accuracy.
In addition, we quantize the weights and activations of all layers except that
the data for input layer and the last layer kept to 8-bit, as done in \cite{HubaraCS2016,ZhuangSL2018}, respectively.

\begin{table}[htbp]
\centering
\small
\caption{ Top-$1$ accuracy ($\%$) for post-training quantization on \mbox{CIFAR-10}. } \label{tab:cifar10}

\begin{tabular}{cccccccc}
 \toprule \rule{0em}{1em}
 model & FP32 ($\%$)
       & Bits (W/A) & Histogram & AdaRound & OMSE & SQuant & REQuant   \\ \midrule \rule{0em}{1em}
\multirow{3}{*}{ResNet-18} & \multirow{3}{*}{95.15}
       & 8/8 & 95.11  & 95.14  & 95.14  & 93.98  & 95.15  \\ \rule{0em}{1em}
    &  & 6/6 & 94.86  & 94.61  & 94.82  & 94.06  & 95.10  \\ \rule{0em}{1em}
    &  & 4/4 & 90.02  & 85.01  & 92.62  & 93.70  & 94.67  \\ \midrule
\multirow{3}{*}{ResNet-34} & \multirow{3}{*}{95.41}
       & 8/8 & 95.35  & 95.27  & 95.32  & 94.28  & 95.37  \\ \rule{0em}{1em}
    &  & 6/6 & 95.27  & 95.03  & 95.12  & 94.22  & 95.38  \\ \rule{0em}{1em}
    &  & 4/4 & 92.02  & 87.71  & 93.71  & 94.09  & 94.96  \\ \midrule
\multirow{3}{*}{ResNet-50} & \multirow{3}{*}{95.24}
       & 8/8 & 95.15  & 95.13	& 95.18	 & 93.38  & 95.21  \\ \rule{0em}{1em}
    &  & 6/6 & 94.80  & 94.68	& 94.89  & 93.33  & 95.21  \\ \rule{0em}{1em}
    &  & 4/4 & 84.99  & 68.44	& 92.63	 & 92.96  & 94.47  \\ \midrule
\multirow{3}{*}{ResNet-101} & \multirow{3}{*}{95.47}
       & 8/8 & 95.46  & 95.43	& 95.37	 & 93.85  & 95.41  \\ \rule{0em}{1em}
    &  & 6/6 & 95.11  & 94.99	& 95.21	 & 93.83  & 95.39  \\ \rule{0em}{1em}
    &  & 4/4 & 80.46  & 67.07	& 92.28	 & 93.44  & 94.69  \\ \midrule
\multirow{3}{*}{Inception-v3} & \multirow{3}{*}{95.58}
       & 8/8 & 95.58  & 95.51	& 95.52	 & 94.81  & 95.57  \\ \rule{0em}{1em}
    &  & 6/6 & 95.10  & 95.11	& 95.29  & 94.42  & 95.10  \\ \rule{0em}{1em}
    &  & 4/4 & 82.84  & 68.37	& 92.21	 & 93.29  & 89.40  \\ \bottomrule
\end{tabular}
\end{table}

\subsection{Comparison with {\bf other PTQ methods}}
\label{sec:comparison}

To validate the effectiveness of our method, we compare our approach under
weight and activation quantization settings.
The experiments cover modern deep learning architectures, including ResNet family \cite{HeZR2017}
and Inceptionv3 \cite{SzegedyVI2016}.
We compare with baselines including OMSE \cite{OMSE2019}, AdaRound \cite{AdaRound2020},
Histogram \cite{PyTorch2023}, and SQuant \cite{SQuant2022} in which most of them have good performances
in low-bit quantization.

In order to better measure the quantitation effect, we have conducted 8-bit, 6-bit
and 4-bit quantitation tests respectively. In addition, we also show the performance
of several other PTQ quantization methods on the same pre training model.
As shown in Table \ref{tab:cifar10}, by observing the experimental results on the CIFAR-10 dataset,
it can be seen that for the ResNet model, the quantization method in this paper
has almost no precision loss in 8-bit and 6-bit quantization compared to the full-precision model (FP32).
At the same time, the accuracy of 4-bit quantization decreases by no more than $1\%$.
For Inception-v3 model, the precision loss of 6-bit quantization is only $0.48\%$, and
that of 4-bit quantization is only $6.18\%$.

Table \ref{tab:cifar100} shows the experimental results on CIFAR-100. Although the overall performance
of the model on the CIFAR-100 dataset is not as good as that on the CIFAR-10 dataset,
it can be found that the method in this paper always maintains the highest accuracy
when 6-bit quantizing the ResNet model. At the same time, during 4-bit quantization,
the accuracy loss of ResNet-18 model is only $1.19\%$, ResNet-34 model is only $1.62\%$,
ResNet-50 model is about $2\%$, and ResNet-101 model is only about $3\%$.
In addition, Inception-v3 model realizes 6-bit quantization with almost no loss.

\begin{table}[htbp]
\centering
\small
\caption{ Top-$1$ accuracy ($\%$) on post-training quantization on \mbox{CIFAR-100}. } \label{tab:cifar100}

\begin{tabular}{cccccccc}
 \toprule \rule{0em}{1em}
 model & FP32 ($\%$)
       & Bits (W/A) & Histogram & AdaRound & OMSE & SQuant & REQuant   \\ \midrule \rule{0em}{1em}
\multirow{3}{*}{ResNet-18} & \multirow{3}{*}{76.08}
       & 8/8 & 76.13	& 76.12	& 76.16	& 76.07	& 76.08  \\ \rule{0em}{1em}
    &  & 6/6 & 75.65	& 75.58	& 75.87	& 75.92	& 75.94  \\ \rule{0em}{1em}
    &  & 4/4 & 49.07	& 40.31	& 67.68	& 74.97 & 74.89  \\  \midrule
\multirow{3}{*}{ResNet-34} & \multirow{3}{*}{77.58}
       & 8/8 & 77.68	& 77.53	& 77.50 & 77.49 & 77.52  \\ \rule{0em}{1em}
    &  & 6/6 & 76.95	& 76.85	& 76.85	& 77.39	& 77.42  \\ \rule{0em}{1em}
    &  & 4/4 & 61.88	& 45.96 & 68.62	& 76.29	& 75.96  \\  \midrule
\multirow{3}{*}{ResNet-50} & \multirow{3}{*}{78.98}
       & 8/8 & 78.89	& 78.65	& 78.83	& 78.80 & 78.87  \\ \rule{0em}{1em}
    &  & 6/6 & 78.45	& 77.54	& 78.37	& 78.57	& 78.96  \\ \rule{0em}{1em}
    &  & 4/4 & 46.73	& 26.88	& 67.93	& 77.22	& 76.84  \\  \midrule
\multirow{3}{*}{ResNet-101} & \multirow{3}{*}{79.01}
       & 8/8 & 78.98	& 79.02	& 78.94	& 78.90 & 78.96  \\ \rule{0em}{1em}
    &  & 6/6 & 78.28	& 78.01 & 78.26	& 78.81	& 78.84  \\ \rule{0em}{1em}
    &  & 4/4 & 52.07	& 26.58	& 69.99	& 76.81	& 75.94  \\  \midrule
\multirow{3}{*}{Inception-v3} & \multirow{3}{*}{80.07}
       & 8/8 & 80.03	& 80.0  & 79.92	& 79.83	& 80.01  \\ \rule{0em}{1em}
    &  & 6/6 & 78.99	& 78.45 & 78.89	& 78.75	& 79.55  \\ \rule{0em}{1em}
    &  & 4/4 & 18.79	& 3.69	& 66.54 & 73.44	& 50.72  \\ \bottomrule
\end{tabular}
\end{table}

\begin{table}[htbp]
\centering
\small
\caption{The top-1 accuracy ($\%$) of four combinational strategies for quantization
on \mbox{CIFAR-10}. } \label{tab:ablation-cifar10}

\begin{tabular}{ccccccc}
 \toprule \rule{0em}{1em}
model  &   FP32 ($\%$)  & Bits (W/A) & no clip + no reshape & clip + no reshape & no clip + reshape & REQuant
 \\ \midrule \rule{0em}{1em}
\multirow{3}{*}{ResNet-18}  & \multirow{3}{*}{95.15 }
       & 8/8 & 95.05 & 95.03 & 95.11 & 95.15  \\ \rule{0em}{1em}
    &  & 6/6 & 94.85 & 94.81 & 95.08 & 95.10  \\ \rule{0em}{1em}
    &  & 4/4 & 89.09 & 92.48 & 93.33 & 94.67  \\ \midrule
\multirow{3}{*}{ResNet-34}  & \multirow{3}{*}{95.41 }
       & 8/8 & 95.33 & 95.30 & 95.39 & 95.37  \\ \rule{0em}{1em}
    &  & 6/6 & 95.22 & 95.27 & 95.25 & 95.38  \\ \rule{0em}{1em}
    &  & 4/4 & 90.26 & 93.45 & 94.18 & 94.96  \\ \midrule
\multirow{3}{*}{ResNet-50}  & \multirow{3}{*}{95.24 }
       & 8/8 & 95.13 & 95.12 & 95.29 & 95.21  \\ \rule{0em}{1em}
    &  & 6/6 & 94.75 & 94.98 & 95.03 & 95.21  \\ \rule{0em}{1em}
    &  & 4/4 & 78.70 & 90.86 & 92.79 & 94.47  \\ \midrule
\multirow{3}{*}{ResNet-101}  & \multirow{3}{*}{95.47 }
       & 8/8 & 95.44 & 95.37 & 95.38 & 95.41  \\ \rule{0em}{1em}
    &  & 6/6 & 95.02 & 95.19 & 95.27 & 95.39  \\ \rule{0em}{1em}
    &  & 4/4 & 73.0  & 89.04 & 92.05 & 94.69  \\ \hline \midrule
\multirow{3}{*}{Inception-v3}& \multirow{3}{*}{95.58 }
       & 8/8 & 95.61 & 95.38 & 95.24 & 95.57  \\ \rule{0em}{1em}
    &  & 6/6 & 94.95 & 94.81 & 95.50 & 95.10  \\ \rule{0em}{1em}
    &  & 4/4 & 66.63 & 87.63 & 90.07 & 89.40  \\ \bottomrule
\end{tabular}
\vspace{-12pt}
\end{table}

\subsection{Ablation study}
\label{sec:ablation}

{\bf Effect of four combinational strategies for quantization on the top-1 classification accuracy.}

To comprehensively evaluate the effectiveness of our proposed quantization strategy,
we conduct an ablation study by comparing four different combinational strategies
for quantization on the CIFAR-10 and CIFAR-100 datasets.
These combinations include:
(1) ``no clip + no reshape'', representing the baseline quantization method described in Section \ref{sec:pre};
(2) ``clip plus no reshape'', which incorporates the clipping operation introduced in Section \ref{sec:quantmodel};
(3) ``no clip + reshape'', where the shapping parameter $\alpha$ is setting to $1$ described in Section \ref{sec:reshape}; and
(4) REQuant, our quantization method, combining both clipping and reshaping described in Section \ref{sec:reshape}.
We show the top-1 classification accuracy of the four quantization strategies
on ResNet-18 on CIFAR-10 in Table \ref{tab:ablation-cifar10}.

As can be seen in Table \ref{tab:ablation-cifar10}, comparing different strategies
across the same model and bit-width setting, we observe that each added component
contributes to the top-1 accuracy improvements. The clipping operation alone alleviates
the effect of outliers, while the reshaping operation (even without clipping) enhances
the scaling flexibility. Meanwhile, although accuracy generally decreases
with lower bit-width, the clip plus reshape strategy effectively mitigates this degradation.
For most tested models, the accuracy at 8/8 and 6/6 bit-width with clip + reshape
is nearly on par with the full-precision baseline, and even at 4/4 precision,
the performance remains competitive. For example, ResNet50 at 4/4 precision improves dramatically
from $78.70\%$ (no clip + no reshape) to $94.47\%$ with our method.
Similarly, Inception-v3 shows a substantial gain from $66.63\%$ to $89.40\%$.

\begin{table}[htbp]
\centering
\small
\caption{The top-1 accuracy ($\%$) of four combinational strategies for quantization
on \mbox{CIFAR-100}.
} \label{tab:ablation-cifar100}

\begin{tabular}{ccccccc}
 \toprule \rule{0em}{1em}
model  &   FP32 ($\%$)  & Bits (W/A) & no clip + no reshape & clip+no reshape & no clip+reshape & REQuant
 \\ \midrule \rule{0em}{1em}
\multirow{3}{*}{ResNet-18}  & \multirow{3}{*}{76.08}
       & 8/8 & 75.96 & 76.10 & 76.13 & 76.08  \\ \rule{0em}{1em}
    &  & 6/6 & 75.71 & 75.70 & 76.11 & 75.94  \\ \rule{0em}{1em}
    &  & 4/4 & 51.55 & 69.17 & 71.83 & 74.89  \\ \midrule
\multirow{3}{*}{ResNet-34}  & \multirow{3}{*}{77.58}
       & 8/8 & 77.56 & 77.44 & 77.39 & 77.52  \\ \rule{0em}{1em}
    &  & 6/6 & 76.75 & 76.83 & 77.53 & 77.42  \\ \rule{0em}{1em}
    &  & 4/4 & 58.73 & 68.07 & 73.46 & 75.96  \\ \midrule
\multirow{3}{*}{ResNet-50}  & \multirow{3}{*}{78.98}
       & 8/8 & 78.88 & 78.90 & 78.90 & 78.87  \\ \rule{0em}{1em}
    &  & 6/6 & 77.84 & 78.55 & 78.70 & 78.96  \\ \rule{0em}{1em}
    &  & 4/4 & 38.85 & 69.29 & 71.43 & 76.84  \\ \midrule
\multirow{3}{*}{ResNet-101}  & \multirow{3}{*}{79.01}
       & 8/8 & 79.02 & 78.90 & 78.84 & 78.96  \\ \rule{0em}{1em}
    &  & 6/6 & 78.42 & 78.59 & 78.90 & 78.84  \\ \rule{0em}{1em}
    &  & 4/4 & 40.04 & 65.91 & 70.17 & 75.94  \\ \hline \midrule
\multirow{3}{*}{Inception-v3}& \multirow{3}{*}{80.07}
       & 8/8 & 79.98 & 79.91 & 79.99 & 80.01  \\ \rule{0em}{1em}
    &  & 6/6 & 78.30 & 79.09 & 79.61 & 79.55  \\ \rule{0em}{1em}
    &  & 4/4 & 6.76  & 31.31 & 51.54 & 50.72  \\ \bottomrule
\end{tabular}
\end{table}

Table \ref{tab:ablation-cifar100} shows the top-1 classification accuracy of
the four quantization strategies on ResNet-18 on \mbox{CIFAR-100}.
By Table \ref{tab:ablation-cifar100}, we can see it presents similar observations
on the more challenging \mbox{CIFAR-100} dataset to those on \mbox{CIFAR-10}.
Here, the benefits of our strategy are even more significant, especially
under low-bit scenarios. For instance, ResNet50 and ResNet101 show performance
improvements of over $35\%$ and $36\%$, respectively, when transition from the baseline
to the full method at 4/4 precision.
Inception-v3, which suffers severely from quantization in the absence of clipping and reshaping,
recovers up to $50.72\%$ accuracy with REQuant.

In summary, the ablation results clearly demonstrate that both clipping and reshaping contribute significantly to preserving model accuracy under quantization, and their combination is particularly effective in low-bit scenarios. Our clip plus reshape strategy consistently delivers the best performance across various models and datasets, confirming its general applicability.

{\bf Quantization loss and search time of three one-dimensional search algorithms on ResNet-18.}
To see which one-dimensional search algorithm is chosen to solve the optimization
problem defined in \eqref{eq:fmin}, we conducted some experiments using three
commonly used methods: bisection linear search, Golden section search, and Nelder-Mead search \cite{Wang2021}
to search the optima. We show the quantization loss $f(\alpha, b)$ defined in \eqref{eq:fmse}
and search time on five convolutional layers of the ResNet-18 model of the three search algorithms
on CIFAR-10 in Table \ref{tab:ablation-search},
where ``Bisection'', ``GoldenSearch'' and ``Nelder-Mead'' denote bisection linear search,
Golden section search, and Nelder-Mead search \cite{Wang2021}, respectively.
It can be seen in Table \ref{tab:ablation-search}, the golden section search has the minimum quantization
loss with the fastest search time.

\begin{table}[htbp]
\centering
\small
\caption{Quantization loss and search time in milliseconds (ms) on five convolutional layers
of the ResNet-18 model for three one-dimensional search algorithms on CIFAR-10. } \label{tab:ablation-search}

\begin{tabular}{ccccc}
 \toprule \rule{0em}{1em}
 layer  &   method  & $\alpha$ & $f(\alpha,8)$ & time (ms)
 \\ \midrule \rule{0em}{1em}
\multirow{3}{*}{layer1.0.conv1}
    & Bisection    & 0.93751251 & 2.160607e-07 & 76.79  \\ \rule{0em}{1em}
    & GoldenSearch & 0.93638047 & 2.159448e-07 & 39.89  \\ \rule{0em}{1em}
    & Nelder-Mead  & 0.93636170 & 2.159448e-07 & 55.44  \\  \midrule
\multirow{3}{*}{layer1.0.conv2}
    & Bisection    & 0.93698534 & 1.957043e-07 & 92.77  \\ \rule{0em}{1em}
    & GoldenSearch & 0.94232728 & 1.954397e-07 & 46.68  \\ \rule{0em}{1em}
    & Nelder-Mead  & 0.94234924 & 1.954396e-07 & 66.73  \\ \midrule
\multirow{3}{*}{layer1.1.conv1}
    & Bisection    & 0.91983789 & 1.588616e-07 & 50.49  \\ \rule{0em}{1em}
    & GoldenSearch & 0.91988005 & 1.588617e-07 & 43.88  \\ \rule{0em}{1em}
    & Nelder-Mead  & 0.91979981 & 1.588619e-07 & 58.71  \\ \midrule
\multirow{3}{*}{layer1.1.conv2}
    & Bisection    & 0.84358125 & 1.372036e-07 & 75.05  \\ \rule{0em}{1em}
    & GoldenSearch & 0.84569131 & 1.371056e-07 & 40.89  \\ \rule{0em}{1em}
    & Nelder-Mead  & 0.84575195 & 1.371057e-07 & 124.97  \\ \midrule
\multirow{3}{*}{layer2.0.conv1}
    & Bisection    & 0.75373342 & 1.032244e-07 & 67.78  \\ \rule{0em}{1em}
    & GoldenSearch & 0.75373421 & 1.032244e-07 & 42.23  \\ \rule{0em}{1em}
    & Nelder-Mead  & 0.75381195 & 1.040503e-07 & 69.71  \\  \bottomrule
\end{tabular}
\vspace{-12pt}
\end{table}

\section{Conclusions}
\label{sec:conclusions}

In this paper, we propose an effective method for quantization range estimation.
We introduce the concept of range estimation and model the range estimation into
an optimization problem of minimizing quantization errors.
We prove this problem is locally convex and present an efficient search algorithm
to find the optimal solution.
We transform the weights to reshape the distribution of weights so that the quantization interval
can be allocated effectively for the densely and sparsely distributed weights
to do further improvements in practice.
We derive the convexity for the corresponding optimization problem in the transformed weights space,
so that we can apply our proposed search algorithm to the minimum quantization errors.
Our experiments demonstrate that our method achieves state-of-the-art performance on top-1 accuracy
for image classification tasks on the ResNet series models and Inception-v3 model.
Some interesting work are to further improve quantization model under low-bit setting
and verify it on the ImageNet dataset \cite{Deng2009ImageNet}.

\bibliographystyle{unsrtnat}
\bibliography{references}  






\end{document}